\title{Geometric Decomposition of Feed Forward Neural Networks}
\author{Sven Cattell}
\newcommand{\R}{\mathbb{R}}
\newcommand{\aA}{\mathcal{A}}
\newcommand{\aR}{\mathcal{R}}
\newcommand{\aS}{\mathcal{S}}
\newcommand{\ah}{\mathfrak{h}}
\newcommand{\U}{\mathcal{U}}
\newcommand{\Chi}{\mathds{1}}
\def\BState{\State\hskip-\ALG@thistlm}
\newtheorem{defi}{Definition}
\newtheorem{theorem}{Theorem}[section]
\newtheorem{lemma}[theorem]{Lemma}
\begin{document}

\maketitle
\begin{abstract}
There have been several attempts to mathematically understand neural networks and many more from biological and computational perspectives. The field has exploded in the last decade, yet neural networks are still treated much like a black box. In this work we describe a structure that is inherent to a feed forward neural network. This will provide a framework for future work on neural networks to improve training algorithms, compute the homology of the network, and other applications. Our approach takes a more geometric point of view and is unlike other attempts to mathematically understand neural networks that rely on a functional perspective. 
\end{abstract}

\section{Introduction}

In the last decade deep learning has exploded. With many applications to image recognition, natural language processing, and many other pattern recognition tasks that have traditionally had poor performance with other machine learning techniques seem to be finally cracked. However, deep learning is rather poorly understood mathematically. In the original papers the intuition provided is compelling and has proved to be useful in developing the technique \cite{hinton2006reducing}\cite{bengio2005convex}\cite{lecun2015deep}. However, this needs some mathematical backing in order to properly use and improve the techniques. 

This paper aims to give a description of the parts of a neural network and their functions. In order to simplify the structure of a fully connected feed forward neural network to one that is easily understood we will mainly work with the step edge activation function. This may seem like a step back from the modern use of smooth or piecewise linear activation functions, but if the activation function we want to train against is a approximation of the step edge function (like the sigmoidal function), then we can still make strong inferences of the general structure. This paper focuses on the geometry of the network so, unless otherwise noted, we are using the step-edge activation function everywhere. The main theorem of this paper is:

\begin{theorem}
A binary classification neural network is an indicator function on a union of some regions in $\R^n - \bigcup_{P \in \mathcal{A}} P$, where $\mathcal{A}$ is defined by the first layer.
\end{theorem}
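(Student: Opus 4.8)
The plan is to exploit the fact that the step-edge activation quantizes the output of the first layer, so that the entire geometry of the classifier is fixed once the first layer is chosen. First I would write the first layer as a map $\R^n \to \{0,1\}^m$ whose $i$-th coordinate is $\sigma(\langle w_i, x \rangle + b_i)$, where $\sigma$ is the step-edge function, $m$ is the number of first-layer neurons, and $(w_i, b_i)$ are the weight vector and bias of neuron $i$. Each neuron's decision boundary is the affine hyperplane $P_i = \{x \in \R^n : \langle w_i, x \rangle + b_i = 0\}$, and I would take $\mathcal{A} = \{P_1, \dots, P_m\}$ to be the resulting hyperplane arrangement.

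The key geometric step is to observe that on any connected component $R$ of the complement $\R^n - \bigcup_{P \in \mathcal{A}} P$ — a chamber of the arrangement — each affine functional $\langle w_i, x \rangle + b_i$ has constant sign, since it is continuous and nonvanishing on $R$. Hence every first-layer neuron is constant on $R$, and the first layer sends all of $R$ to a single vector $v_R \in \{0,1\}^m$. I would then note that the remaining layers compose to a function $g : \{0,1\}^m \to \{0,1\}$ — for a binary classifier the final output is a single step-edge neuron — and that the full network is $g$ precomposed with the first layer. Because the first layer is constant on each chamber $R$, the whole network is constant on $R$, taking the value $g(v_R) \in \{0,1\}$.

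Finally I would set $S$ to be the union of those chambers $R$ on which $g(v_R) = 1$; by construction the network agrees with $\Chi_S$ on $\R^n - \bigcup_{P \in \mathcal{A}} P$, which is exactly the claimed indicator function on a union of regions. The conceptual heart of the argument — and the point I would emphasize — is that the deeper layers can never refine this decomposition: once the first layer has quantized its input to a point of $\{0,1\}^m$, every subsequent layer sees only finitely many possible inputs and can merge or relabel chambers but cannot introduce new boundaries. Thus the arrangement $\mathcal{A}$ is genuinely \emph{defined by the first layer alone}.

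The main obstacle is not any hard estimate but rather pinning down the behaviour on the hyperplanes themselves, where $\sigma$ is evaluated at $0$. There the output depends on the chosen convention for the step edge, and the decomposition into chambers does not by itself determine the value. This is precisely why the statement restricts to the open complement $\R^n - \bigcup_{P \in \mathcal{A}} P$, so in the proof I would simply exclude that measure-zero set rather than commit to a boundary convention; the only care needed is to verify that the chambers are indeed the connected components on which all sign patterns are constant, which is a standard property of affine hyperplane arrangements.
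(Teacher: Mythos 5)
Your proof is correct, and it reaches the theorem by a genuinely different route than the paper. The paper argues by induction on the hidden height: each node of the last hidden layer is, by the inductive hypothesis, a selection of regions of the arrangement $\aA$ cut out by the first layer; the output node is a weighted union of these selections; and Lemma \ref{1weightedunion} (a weighted union is a union of intersections) together with Lemma \ref{selectionpps} (unions and intersections of selections are again selections) closes the induction. You instead argue directly: every first-layer affine functional is continuous and nonvanishing on a chamber, hence of constant sign there, so the first layer collapses each chamber to a single vertex of $\{0,1\}^m$; the remaining layers define a map $g:\{0,1\}^m\to\{0,1\}$, so the whole network is constant on each chamber and equals the indicator of the union of chambers where $g$ evaluates to $1$. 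Your version is more elementary and more self-contained --- notably, the paper's Lemma \ref{selectionpps} is stated without proof and the base case of its induction is left implicit, whereas your two ingredients (constant sign on a connected set, factorization through a finite set) are standard and fully justified; your argument also applies verbatim to every node of the network, not just the output. What the paper's heavier machinery buys is a layer-by-layer calculus: weighted unions, polarizations, and the order-preserving map $\sigma$ characterize exactly which selections a single layer can produce from the previous one (unions of intersections, not arbitrary Boolean combinations), and that finer structure is what supports the paper's subsequent remarks, e.g., that two regions never separated by the penultimate layer cannot be separated by the output node. Your treatment of $g$ as an arbitrary Boolean function suffices for this theorem but deliberately ignores that structure. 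Finally, your explicit handling of points lying on the hyperplanes themselves --- excluding that measure-zero set rather than committing to a convention for $\ah$ at $0$ --- makes precise something the statement and the paper's proof both leave implicit.
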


This is not similar to previous work in mathematically understanding shallow neural networks of Kolmogorov \cite{Kolmogorov}, Funahashi \cite{Funahashi:1989:ARC:71287.71290}or Sprecher \cite{sprecher1965structure}. They prove that there is some network that can approximate a given function. However this is non-constructive and does not apply to deep networks. The more recent results of \cite{2015arXiv150907385S} and \cite{2016arXiv160707110C} construct well understood networks of arbitrary precision. This paper is intended to give a more general structure theory that can be applied to build networks or manipulate them.

\section{Background}

\subsection{Neural Networks}
This part of the background is intended for mathematicians unfamiliar with neural networks. Feed forward neural networks are easily expressed as a composition of linear functions, with a non-linear \emph{activation function}. We will denote the activation function by $\ah$. It is always defined as a single variable function. However, it is regularly referred to as being defined on $\R^n$, not just $\R$, this is just $\ah^{\oplus n}$, or $\ah$ on each coordinate of $\R^n$. 
\begin{defi}\label{nndefi}
A \emph{classification neural network} of height $h$ with $n$ inputs and $k$ outputs is a function $N:\R^n \to \R^k$ such that it can be written as a composition of $h+1$ functions of the form $\ah(A(x)+b)$ where $A$ is some matrix $b$ is an offset vector called \emph{layers}. In other words there exists $h+1$ matrices $A_i$, with offset vectors $b_i$, and:
$$ N(x) = \ah(A_{h+1}\ah(A_h(\dots \ah(A_2\ah(A_1(x)+b_1)+b_2)\dots)+b_h)+b_{h+1})$$
Each $A_i$ is a matrix of size $k_{i} \times k_{i-1}$ and $b_i$ is a $k_{i}$-tuple, we say the layer $i$ has $k_i$ nodes. 
\end{defi}
Of course, this is a rather obtuse definition, it is usually expressed as a directed graph. The nodes are organized into layers with directed edges pointing up the layers. These edges are given weights and each node, other than the input nodes, are a linear sum of all nodes that have an edge pointing towards the aforementioned node. Sometimes the offset is represented by an additional input node who is connected to all the nodes in higher layers and has a fixed value of $1$, but here we shall think of the offset as a part of each node that isn't a input node.

For simplicity, we will mainly discuss a neural network with one output. All the results generalize quite easily to multiple outputs, so they are mostly omitted.

\subsection{Hyperplane Arrangements}
This section of the background is intended for those familiar with neural networks, but unfamiliar with the language of hyperplane arrangements. It is intended to be a short summary of the necessary language required for the main result. The following is mostly lifted from \cite{hyperplanearranegement}, which has much more detail that we may provide here. 

An \emph{arrangement of hyperplanes}, $\aA$ in $\R^n$ is a finite set of codimension 1 affine subspaces. The \emph{regions} between the hyperplanes refer to the components of $\R^n - \bigcup_{P \in \aA} P$. An arrangement is in \emph{general position} if a small perturbation of the hyperplanes in the arrangement does not change the number of hyperplanes. For example two parallel lines in $\R^2$ are not in general position. An arbitrarily small rotation of one of the lines will result in the number of regions increasing from $3$ to $4$. The other essential structure inherent in hyperplane arrangements 

\begin{defi}
The intersection poset of an arrangement of $k$ hyperplanes, $\aA$ is the set 
$$ L(\aA)=\{ x \subset \{ 1,\dots ,k\} | \R^n \cap \bigcap_{i \in x} P_i \neq \emptyset  \} $$
where $P_i \in \aA $. This is equipped with the partial order $x \leq y$ if $x \subset y$.
\end{defi}

A poset is a set equipped with a partial relation, i.e. if $x,y \in X$ then we may have $x \neq y$, $x \not< y$ and $y \not< x$. In a poset, an element $y$ \emph{covers} $x$, or $x \lessdot y$, if there is no $z$ different from $x$ and $y$ such that $x<z<y$. We can draw the Hasse diagram of a poset by drawing a node for each element of the poset and a directed edge from the node for $x$ to the node for $y$ if $x \lessdot y$.

\section{Characterization of a Perceptron}

If we examine the first layer of a neural network each node corresponds to a weighted sum of the input nodes with an offset. We can represent the weighted sum by a vector $v$, and the offset by $b$. Then the node is ``active'' on the point $x$, and outputting a $1$ if $v\cdot x > b$, and $0$ otherwise. This is the same as the indicator function on the positive component, $U$, of $\R^n -P$ where $P$ is the associated hyperplane the following set:
$$ U=\{x \in \R^n | v \cdot x > b  \}. $$
So, another way of expressing a node on the first layer is by the indicator function on the set $U$, $\Chi_U$. 

Therefore the input, under the step edge activation function, to the second layer of the network is the output of a collection of $n$ indicator functions. The second layer has a binary input on $n$ variables for each point in the underlying space. Therefore all it can do is make decisions based on which side of each hyperplane the given input point is. We will use $\mathcal{A}$ to denote a collection of hyperplanes in $R^n$. This divides $\R^n$ into regions, $\aR$, the connected components of $\R^n - \bigcup_{P \in \mathcal{A}} P$. The second layer is only aware of which region you are in as each region produces a unique signature output of the first layer. The subsequent layers are a means of making a choice of which regions to include, each layer past the first will amount a process we call a weighted union of the sets associated to the nodes of the previous layer. 

\subsection{Regions in a Polarized Arrangement}
A plane in $\R^n$ is defined by a normal vector $v$ and a offset value $b$. If we require that the normal vector be of length 1, then there are two normal vectors to choose from. Both work, but they define a different orientation of the hyperplane. Therefore, rather than just having two components of $\R^n$ we can now distinguish between them.

\begin{defi}
A polarization of a hyperplane in $\R^n$ for the plane $P$ with normal vector $v$ and offset $b$ are the two sets:
$$ \aR_P^{+} = \{x | x \cdot v + b> 0 \} $$
$$ \aR_P^{-} = \{x | x \cdot v + b< 0 \} $$
We call $\aR_P^+$ the positive side and $U_P^-$ the negative side of the plane.
\end{defi}

We usually index our hyperplanes by the numbers $\{ 1,2, \dots k\}$ so we write $\aR_{i}^+$ for the positive side of the $i$th hyperplane in this case. Nodes in the first layer of the neural network are the \emph{hyperplane layer} as they determine a polarized arrangement of hyperplanes. 

For a polarizes arrangement of hyperplanes $\aA$ indexed by a set $I=\{ 1,2, \dots k\}$ we can define and label the regions. 

\begin{defi}
The regions, $\aR$ of a polarization arrangement of hyperplanes $\aA$ is the set of convex polytopes formed by taking all possible intersections of the positive and negatives sides of an ordered set of $k$ plane partitions. Each region is labeled by a $J \subset I$ where 
$$\aR_J = \bigcap_{j\in J} \aR_j^+ \cap \bigcap_{j \in I-J} \aR_j^- $$
\end{defi} 

We are using a similar definition to \cite{MEISER1993286}, but with a subset of $\{1, \dots, n\}$ instead of a ordered $n$-tuple. The two methods are isomorphic, but we choose to use sets for now. There are $2^k$ different labelings one could have for a hyperplane arrangement, depending on the polarization. See figure \ref{repolarization} for an example. 

\begin{figure}\label{repolarization}
\centering
\includegraphics[width=4in]{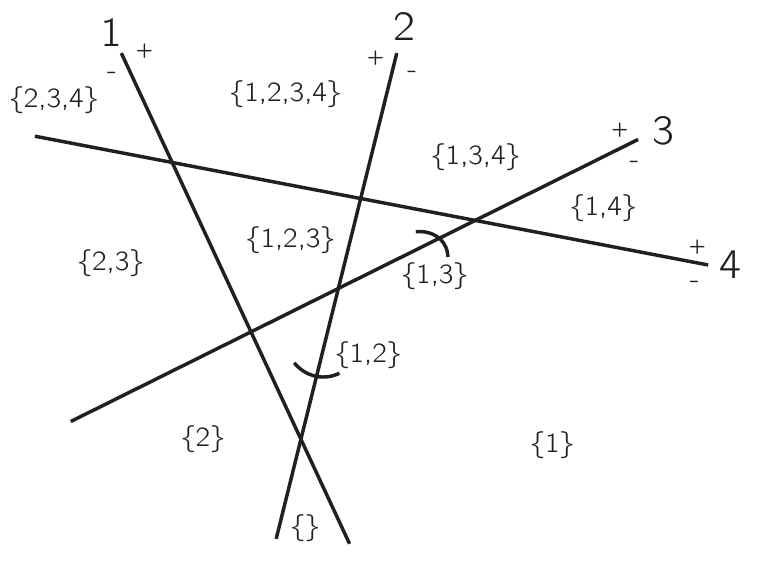}
\includegraphics[width=4in]{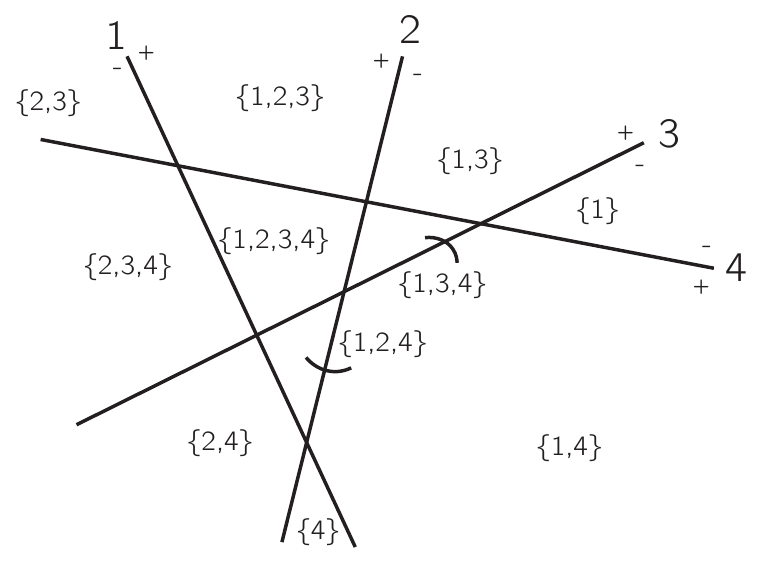}
\caption{Two different labellings of the same regions of an arrangement of hyperplanes. The difference is the plane labeled by $4$ changes polarity. Note, no one region needs to be labeled with $\emptyset$ unlike the labeling from \cite{regionlabeling}}
\end{figure}

Note, not all possible labellings are non-empty regions as the largest possible number of regions in for an arrangement of $k$ hyperplanes $\R^n$ is:
$$ r(\aA) = 1 + k + {k \choose 2} + \dots {k \choose n}. $$
Which is $2^n$ if $k=n$, but when $k>n$ is strictly less than $2^k$. This is when the regions are in general position, that is we can change the values of all the normal vectors and offsets by a small amount and the number of regions will not change. The initial layer of a perceptron will most likely be in general position. We call an index $J \subset I$ trivial if $\aR_J = \emptyset$.

\subsection{Weighted Unions and Selection Layers}

Layers of the neural network after the first layer will amount to a weighted union of the sets associated to the previous layer. The second layer is a weighted union of the positive sides of the polarized hyperplanes generated by the first layer. This results in each node being equivalent to a union of the regions of the arrangement. Thus we call a layer a \emph{selection layer} if it is not the first layer. We start with a set level definition of what each node in a selection layer:

\begin{defi}
A weighted union of subsets of $X$, $\{U_i\}_{i \in I}$, with weights $a_i$ and offset $b$ is
$$ \bigcup_{i\in I}(a_i U_i)-b = \left \{ x | \ah ( \sum_{i \in I} a_i \Chi_{U_i}(x) -b ) > 0 \right \}. $$
The characteristic function is defined on $X$.
\end{defi}

It is clear that this is just taking the output of a node and converting it back to the associated set. In order to manipulate a selection node we need a clear understanding of the weighted union. For a single set $U\subset X$, we can take the compliment of $U$ by the weighted union: $-1 U -\frac{1}{2}$. We can also take the union and intersection of two or more sets:
$$ \bigcap_{i=1}^n U_i = \bigcup_{i=1}^n 1 U_i - (n-\frac{1}{2}),\quad \quad \bigcup_{i=1}^n U_i = \bigcup_{i=1}^n 1 U_i - (\frac{1}{2}).$$
The following lemma demonstrates some of the limitations of the weighted union:

\begin{lemma}\label{1weightedunion}
A weighted union of sets $U_i$ can be written as a union a subset of all intersections of those sets and their complements.
\end{lemma}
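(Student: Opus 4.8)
The plan is to observe that each characteristic function $\Chi_{U_i}$ cannot distinguish two points lying on the same side of every $U_i$, so the entire weighted sum $\sum_{i \in I} a_i \Chi_{U_i}(x)$ depends only on which cell of the Boolean partition generated by the $\{U_i\}$ contains $x$. Once this is established, membership in the weighted union is decided cell-by-cell, and the union is automatically a union of some of those cells, which are precisely the intersections of the sets and their complements.

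First I would make the cells explicit. For each subset $S \subseteq I$ set
$$ A_S = \bigcap_{i \in S} U_i \cap \bigcap_{i \in I - S} (X - U_i), $$
which is exactly an intersection of the sets $U_i$ and their complements. These $2^{|I|}$ sets are pairwise disjoint, and every point $x \in X$ lies in precisely one of them, namely $A_{S(x)}$ where $S(x) = \{ i \in I \mid x \in U_i \}$; hence the nonempty $A_S$ partition $X$.

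Next I would evaluate the weighted sum on a single cell. If $x \in A_S$ then $\Chi_{U_i}(x) = 1$ for $i \in S$ and $\Chi_{U_i}(x) = 0$ otherwise, so
$$ \sum_{i \in I} a_i \Chi_{U_i}(x) = \sum_{i \in S} a_i =: c_S, $$
a constant depending only on $S$ and not on the chosen point $x$. Consequently the defining condition $\ah\!\left( \sum_{i \in I} a_i \Chi_{U_i}(x) - b \right) > 0$, which for the step-edge activation reduces to $c_S > b$, holds for every point of $A_S$ or for none. Therefore the weighted union equals $\bigcup_{S : c_S > b} A_S$, a union of a subset of the cells, which is the content of the lemma.

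The argument has no genuinely hard step; its entire substance is the collapse in the previous paragraph, where a sum that superficially ranges over a continuum of inputs is shown to take only the finitely many values $c_S$ indexed by $S \subseteq I$. The only point demanding a little care is the bookkeeping of the partition — in particular verifying that empty cells cause no trouble, since they contribute nothing to the union and may simply be dropped from the indexing set $\{ S : c_S > b \}$.
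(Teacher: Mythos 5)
Your proof is correct and takes essentially the same approach as the paper: decompose $X$ into the Boolean cells $U_J=\bigcap_{j\in J} U_j \cap \bigcap_{j\in I-J}(X-U_j)$, observe that the weighted sum is constant (equal to $\sum_{j\in J}a_j$) on each cell, and take the union of the cells whose value exceeds the offset. The one difference is that the paper first replaces negatively-weighted sets by their complements so that $\sigma(J)=\sum_{j\in J}a_j$ becomes order-preserving on $P(I)$; this normalization is not needed for the lemma itself, as your direct evaluation shows, but is reused by the paper in its subsequent discussion of polarizations.
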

\begin{proof}
Let $W=\bigcup_{i\in I}(a_i U_i)-b$ be a weighted union. For each $a_i$,
$$ a_i \Chi_{U_i}(x) = -a_i \Chi_{X-U_i}(x) + 2 $$
therefore without loss of generality we may assume for all $i$, $a_i>0$, else replace $a_i$ with $-a_i$, add $a_i$ to $b$ and replace $U_i$ with $X-U_i$. Let $\tilde{b}$ be the adjusted offset
For $J \subset I$ Let 
$$ U_J =\bigcap_{j\in J} U_j \cap \bigcap_{j \in I-J} (X-U_j), $$
The map 
$$\sigma: P(I) \to \R, \quad \sigma(J)\mapsto \sum_{j\in J} a_j,$$
is order preserving as all $a_i$ are non-negative. Let $\{x>\tilde{b}\}$ be the obvious shorthand for $\{x\in \R|x>\tilde{b}\}$, it is easy to show: 
$$ W = \bigcup_{J \in \sigma^{-1}(\{ x>\tilde{b}\})} U_J. $$
\end{proof}

\begin{defi} For a collection of sets $\{U_i\}_{i\in I}$, and for each $J \in P(I)$ the we get a \emph{region} of the 
$$ U_J =\bigcap_{j\in J} U_j \cap \bigcap_{j \in I-J} (X-U_j) $$
The indexing of the region $U_J$ by $J$ is the standard indexing
\end{defi}

We can see from the proof that if we want to find out what set operations are possible with the weighted union we can restrict ourselves to a union of some intersections of our sets. This can be seen through the light of a $n$-ary logical operation that is strictly composed out of a series of `or's on a collection of `and's. This cannot produce all $n$-ary boolean statements \cite{enderton2001mathematical}. 

The proof provides a way to translate the weights and offset to a map on the poset $P(I)$. However the mapping is very dependent on the signs of the original weights. Given two different weighted unions onto the same space we not only have a $\sigma_1$ and a $\sigma_2$, but we also have two different sign corrections. This means that it's more difficult to compare two selection nodes using this technique. We can however remove the sign correction for the polarization after we have determined the which intersections were selected.

To standardize the polarization we define the following. For any map $p : I \to \{+,-\}$ we may define a self map $r_p:P(I)\to P(I)$ where $j \in r_p(J)$ if $j \in J$ and $p(j)=+$ or if $j \not\in J$ and $p(j) = -$. $r_p$ is clearly a bijection for all $p$. For a collection of sets $\{U_i\}_{i\in I}$ and each $p$ we get a different polarization of all possible intersections of the sets and their complements. The $p$ polarization of the regions of $\{U_i \}_{i \in I}$ is equivalent to the standard polarization over $\{U_i^{p(i)}\}_{i \in I}$ where $\U_i^+ = U_i$ and $U_i^- = X -U_i$. 

For the weights $\{a_i\}_{i\in I}$ and offset $b$ the polarization of the regions is dependent on $p$ such that $p(i)=+$ if $a_i \geq0$ and $p(i)=-$ if $a_i<0$. Therefore the polarization of the regions in a weighted union, $\sigma(\{x > \tilde{b}\})$, is over the $p$ polarization of all regions. To convert our polarization to be over the standard polarization we may take the image, $r_p(\sigma(\{x > \tilde{b}\})$.

\begin{defi} 
A \emph{selection} $\aS$ of regions of a collection of sets indexed by $I_\aS \subset P(I)$ to be the union of the regions:
$$ \aS = \bigcup_{J \in \sigma^{-1}(\{ x>\tilde{b}\})} U_J .$$
\end{defi}

\subsection{Characterization of a Perceptron}

We now will characterize a perceptron as a series of weighted unions on top of a polarized arrangement of hyper planes. Figure \ref{bigfigurecharperceptron} shows the various subnetworks in a neural network with 2 inputs, 4 planar nodes in the first layer, 2 selection nodes in the second layer and a selection node in the output layer. 
\begin{figure}\label{bigfigurecharperceptron}
\centering
\includegraphics[width=4.5in]{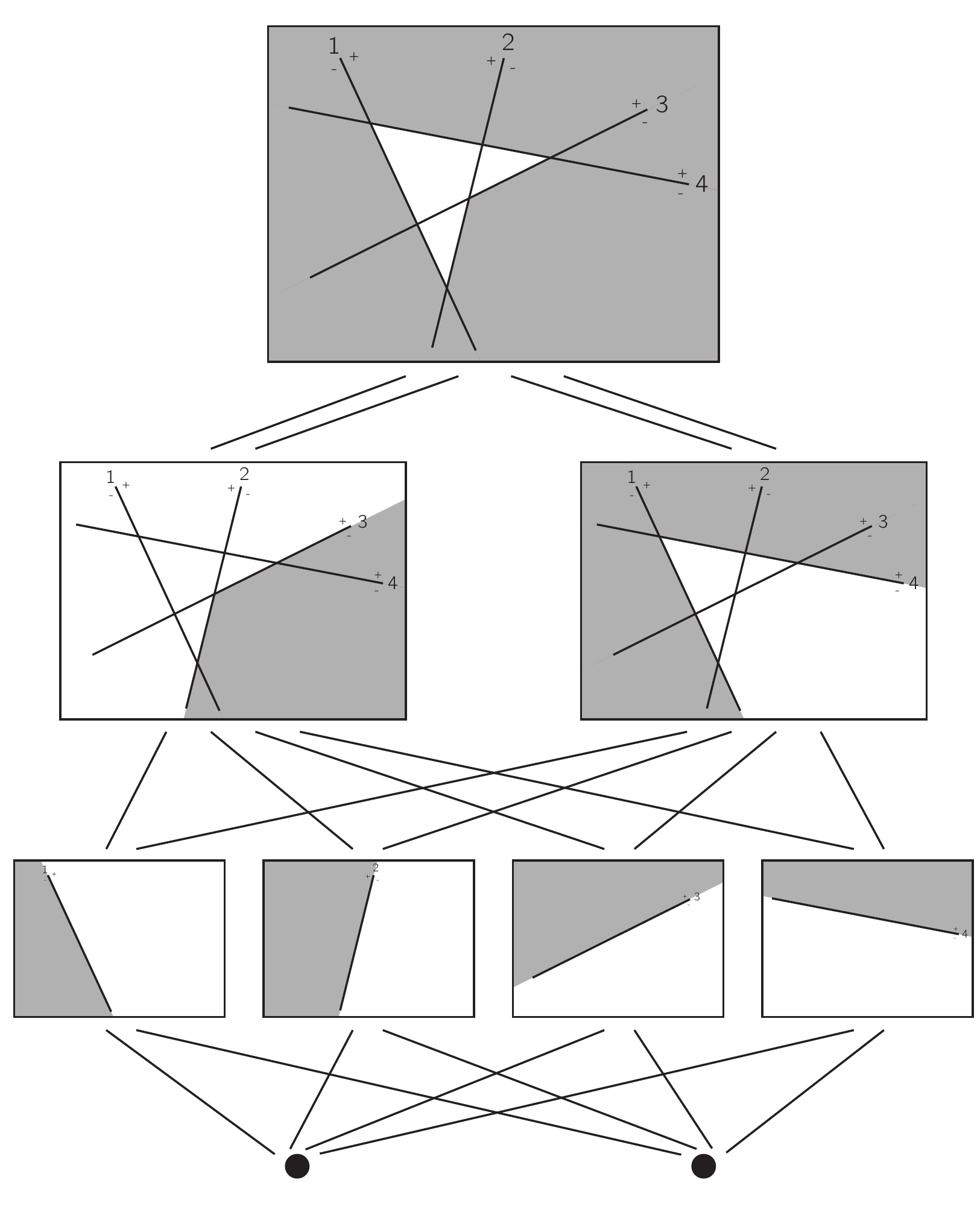}
\caption{A neural network with 4 plane nodes, 2 selection nodes and an output, which is also a selection node. For each node we have highlighted the area where the associated neural subnetwork is positive in grey.}
\end{figure}

\begin{lemma}\label{selectionpps}
A union or intersection of two selections of a plane partition set is another selection of that plane partition set.
\end{lemma}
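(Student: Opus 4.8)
The plan is to reduce both claims to elementary combinatorics of the index sets, the single geometric ingredient being that the regions $U_J$ of a plane partition set are pairwise disjoint. By the definition of a selection, each of the two given selections is a union of regions, so I would fix index sets $K_1, K_2 \subset P(I)$ and write $\aS_1 = \bigcup_{J \in K_1} U_J$ and $\aS_2 = \bigcup_{J \in K_2} U_J$. The goal is then to exhibit subsets of $P(I)$ realizing $\aS_1 \cup \aS_2$ and $\aS_1 \cap \aS_2$ as unions of regions.

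First I would establish disjointness: if $J \neq J'$, then some index $i \in I$ lies in exactly one of $J, J'$, so one region is contained in $U_i$ and the other in $X - U_i$, forcing $U_J \cap U_{J'} = \emptyset$. This is immediate from $U_J = \bigcap_{j \in J} U_j \cap \bigcap_{j \in I - J}(X - U_j)$ and is the only place where the internal structure of the regions enters.

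For the union, distributing gives $\aS_1 \cup \aS_2 = \bigcup_{J \in K_1 \cup K_2} U_J$, which is a selection indexed by $K_1 \cup K_2 \subset P(I)$. For the intersection, I would distribute to obtain $\bigcup_{J \in K_1,\, J' \in K_2}(U_J \cap U_{J'})$ and then invoke disjointness to annihilate every cross term with $J \neq J'$; the surviving diagonal terms are exactly the regions with $J \in K_1 \cap K_2$, so $\aS_1 \cap \aS_2 = \bigcup_{J \in K_1 \cap K_2} U_J$, a selection indexed by $K_1 \cap K_2 \subset P(I)$. Since both $K_1 \cup K_2$ and $K_1 \cap K_2$ are again subsets of $P(I)$, each resulting set is a union of regions of the same plane partition set, hence a selection, which completes the argument.

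The step I expect to carry the weight is the intersection case. The union case is pure distributivity and uses nothing about the regions, but the intersection collapses to a single union of regions only because distinct regions are disjoint; without that fact one would be left with a union of pairwise intersections $U_J \cap U_{J'}$ that need not themselves be regions of the arrangement. So the crux is to state disjointness cleanly and to verify that precisely the diagonal terms with $J = J'$ and $J \in K_1 \cap K_2$ survive.
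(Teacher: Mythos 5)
Your proof is correct and complete. Note that the paper states Lemma \ref{selectionpps} with no proof at all---it is invoked immediately in the proof of Theorem \ref{maintheorem}---so your argument supplies content the paper omits rather than paralleling an existing one. The route you take (pairwise disjointness of the regions $U_J$, then $K_1 \cup K_2$ for the union and $K_1 \cap K_2$ for the intersection) is the natural one, and your diagnosis of where the weight sits is accurate: the union case is purely formal, while the intersection case genuinely needs the atoms $U_J$ to be pairwise disjoint so that the cross terms $U_J \cap U_{J'}$ with $J \neq J'$ vanish. One caveat deserves emphasis, because the paper's definition of a selection is ambiguously worded: it says a selection is indexed by an arbitrary $I_\aS \subset P(I)$, yet the displayed formula ties it to a weighted-union index set $\sigma^{-1}(\{x>\tilde{b}\})$. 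Your proof is valid under the first reading (a selection is \emph{any} union of regions), and that is the only reading under which the lemma is true: under the second reading the lemma fails, since a union of two sets each realizable by a single weighted union need not itself be so realizable---take the regions labeled $\{1\}$ and $\{2\}$ in a two-plane arrangement; each is a single node's output, but their union is the XOR configuration, which no single threshold node produces (this is the limitation the paper itself notes after Lemma \ref{1weightedunion}). So in writing this up you should state explicitly that selections are arbitrary unions of regions indexed by subsets of $P(I)$; that is also the reading the induction in Theorem \ref{maintheorem} requires.
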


\begin{theorem}\label{maintheorem}
A binary classification neural network is an indicator function on a union of some regions of an arrangement of hyperplanes $\aA$, where $\aA$ is defined by the first layer.
\end{theorem}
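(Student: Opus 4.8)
The plan is to proceed by induction on the layers of the network, showing that the set associated to every node in every selection layer is a \emph{selection} of the regions of the arrangement $\aA$ that is fixed once and for all by the first layer. Since a binary classification network has a single output, and by Definition its output node is itself a selection node, the theorem will follow by applying the induction all the way up to the output layer.

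First I would set up the base case at the second layer. The first layer assigns to each node $i$ the indicator $\Chi_{U_i}$ of the positive side $\aR_i^+$ of a hyperplane $P_i$, and the collection $\{P_i\}$ is exactly the arrangement $\aA$; the regions $\aR_J$ of $\aA$ are precisely the sets $U_J$ of the standard indexing. Each node of the second layer is a weighted union of the $\Chi_{U_i}$, so by Lemma \ref{1weightedunion} (and the sign-correction map $r_p$ used to pass back to the standard polarization) its associated set is a union of the $U_J = \aR_J$, i.e. a selection of $\aA$.

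For the inductive step, I would assume every node of layer $\ell$ is a selection of $\aA$, that is a union of regions $\aR_J$, and consider a node of layer $\ell+1$, which is a weighted union $\bigcup_i (a_i \aS_i) - b$ of such selections $\aS_i$. The key geometric observation is that the regions partition $\R^n - \bigcup_{P \in \aA} P$, so for every region $\aR_J$ and every selection $\aS_i$ we have either $\aR_J \subseteq \aS_i$ or $\aR_J \cap \aS_i = \emptyset$; hence each $\Chi_{\aS_i}$ is constant on $\aR_J$. Therefore $\ah(\sum_i a_i \Chi_{\aS_i}(x) - b)$ is constant on every region, and the set on which it is positive is a union of whole regions, again a selection of $\aA$. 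Equivalently, one may argue purely from the given lemmas: Lemma \ref{1weightedunion} writes the weighted union as a union of intersections of the $\aS_i$ and their complements, the complement of a selection is a selection, and by Lemma \ref{selectionpps} finite unions and intersections of selections are selections, so the result is a selection.

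Carrying the induction up to the output layer, the single output node is a selection $\aS$ of $\aA$, and the network computes exactly $\Chi_\aS$ on $\R^n - \bigcup_{P \in \aA} P$, an indicator function on a union of regions of $\aA$. The genuine geometric content is light once one notices that selections are constant on regions; the part requiring real care is the bookkeeping of polarizations across layers, since the sign-correction $r_p$ shifts the ``positive side'' labeling from one layer to the next. The main obstacle is therefore to state the inductive hypothesis in terms of the standard polarization of $\aA$ and to verify that $r_p$ carries selections to selections, so that the notion of ``selection of $\aA$'' is stable under composition.
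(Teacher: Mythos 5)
Your proposal is correct and takes essentially the same route as the paper: induction up the layers, with the key step being that a weighted union of selections of the regions of $\aA$ is again a selection, justified via Lemma \ref{1weightedunion} and Lemma \ref{selectionpps}. Your ``constant on the regions'' observation is a clean, direct way to see that key step (and you make the base case and the polarization bookkeeping explicit, which the paper's own terse proof leaves implicit), but it does not change the substance of the argument.
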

\begin{proof}
We will induct on the hidden height of a neural network. For the inductive step let $N$ be a neural network of height $k+1$. Each node in the final hidden layer of the neural network is a neural network of height $k$, let $U_i$ be the selection of regions for the $i$th node. Let the final node have weight $a_i$ from the $i$th node with offset $b_{k+1}$:
$$ N(x) = \ah \left ( \sum_i a_i \Chi_{U_i}(x) -b_{k+1}\right ) = \Chi_{\bigcup a_i U_i -b_{k+1}}(x). $$
As the weighted union is a finite number of intersections and unions on its the inputs and the inputs are all selections of the same regions by lemma \ref{selectionpps} the weighted union is. 
\end{proof}

We can see that if every selection in the penultimate layer pairs two regions i.e. they are both selected for or selected against together then the final selection node cannot separate the two. 

We can see that a neural network with $k$ outputs is going to be $k$ selections on the regions for the arrangement of hyperplanes. However, they are not independent from each other. The selections made by the last hidden layer affect the final possible selection.

\bibliographystyle{plain}
\bibliography{references}
\end{document}